\newtheorem{theorem}{Theorem}[section]
\newtheorem{lemma}[theorem]{Lemma}
\newtheorem{corollary}[theorem]{Corollary}
\def\BibTeX{{\rm B\kern-.05em{\sc i\kern-.025em b}\kern-.08em
    T\kern-.1667em\lower.7ex\hbox{E}\kern-.125emX}}
\begin{document}
\title{Topological Conditioning for Mammography Models via a Stable Wavelet-Persistence Vectorization}
\author{Charles Fanning and Mehmet Emin Aktas
\thanks{ }
\thanks{ }
\thanks{ }
}

\maketitle

\begin{abstract}
Breast cancer is the most commonly diagnosed cancer in women and a leading cause of cancer death worldwide. Screening mammography reduces mortality, yet interpretation still suffers from substantial false negatives and false positives, and model accuracy often degrades when deployed across scanners, modalities, and patient populations. We propose a simple conditioning signal aimed at improving external performance based on a wavelet based vectorization of persistent homology. Using topological data analysis, we summarize image structure that persists across intensity thresholds and convert this information into spatial, multi scale maps that are provably stable to small intensity perturbations. These maps are integrated into a two stage detection pipeline through input level channel concatenation. The model is trained and validated on the CBIS DDSM digitized film mammography cohort from the United States and evaluated on two independent full field digital mammography cohorts from Portugal (INbreast) and China (CMMD), with performance reported at the patient level. On INbreast, augmenting ConvNeXt Tiny with wavelet persistence channels increases patient level AUC from 0.55 to 0.75 under a limited training budget.
\end{abstract}

\begin{IEEEkeywords}
mammography, wavelets, persistent homology, topological data analysis.\\
\end{IEEEkeywords}

\section{Introduction}
\label{sec:introduction}

\IEEEPARstart{B}{reast} cancer is the most commonly diagnosed cancer in women and a leading cause of cancer mortality worldwide. Screening mammography reduces deaths, but interpretation remains imperfect: substantial \emph{false negatives} (missed cancers) and \emph{false positives} (unnecessary recalls or biopsies) persist in routine practice, and performance varies across readers, scanners, and sites. At the screening scale, these errors translate into patient harm, increased workload, and rising cost.

Despite many reports of strong \emph{within--dataset} performance, three practical limitations constrain the reliability of deployed systems. First, transportability: models trained on a single cohort often exhibit degraded performance under scanner, modality, and population shift, even at large training scales \cite{Kooi2017MammogramCAD,McKinney2020InternationalAI}. Second, explanation fragility: popular saliency and attention methods frequently fail basic faithfulness tests and can vary substantially under small input perturbations or domain shift \cite{Arun2021SaliencyTrustworthiness,Fuhrman2022ExplainableAIReview,10208407,HASSAN2025109569}. Third, morphology sensitivity: lesion--salient microtexture, spiculation, and architectural distortion are inherently multi--scale and orientation--dependent, yet generic convolutional pipelines do not explicitly encode these structures as first--class signals \cite{Eltoukhy2010WaveletCurvelet,Oyelade2022WaveletCNN,RAGHAVENDRA2016151}.

In this work, we use \emph{topological data analysis} to summarize image structure that \emph{persists} across intensity thresholds and to convert this information into spatial, multi–scale channels defined on the image grid. The construction provides a stable representation of coarse and fine morphological variation and yields deterministic auxiliary features that can be visualized alongside the original mammogram.

We incorporate these wavelet–persistence channels as an additional morphology input within a standard two–stage mammography pipeline based on patch–level multiple–instance learning followed by detection. The channels are concatenated with the grayscale image at the network input, and no changes are made to the backbone architecture, losses, or optimization procedures.

We adopt a fixed–source, fixed–budget evaluation design: all training and validation occur on the CBIS–DDSM cohort, and generalization is assessed on two independent full–field digital mammography cohorts, INbreast and CMMD. INbreast is used to evaluate external discrimination under topology conditioning, while CMMD is used to examine the sensitivity of the wavelet–persistence construction through controlled ablations.

Under this protocol, topological conditioning changes external performance for the single controlled probe architecture. On INbreast, augmenting ConvNeXt–Tiny with wavelet–persistence channels increases patient–level AUC from $0.55 \pm 0.11$ to $0.75 \pm 0.09$ under an identical training budget. CMMD serves only as an ablation domain in this study and is not used for a direct grayscale–versus–topology comparison.

The contributions of this work are as follows:
\begin{enumerate}
  \item We develop a wavelet–persistence representation that maps cubical persistent homology into spatial, multi–scale channels on the image grid and establish a global Lipschitz bound of this mapping with respect to the $1$–Wasserstein distance on persistence diagrams (Section~\ref{sec:methods}).
  \item We incorporate these topological channels into a standard two–stage mammography pipeline through input–level channel concatenation, leaving the backbone architecture, losses, and optimization unchanged (Section~\ref{sec:methods}).
  \item We evaluate this conditioning under a fixed–source, fixed–budget external–validation design: all training occurs on CBIS–DDSM, external discrimination is assessed on INbreast, and CMMD is used for controlled ablations of the wavelet–persistence construction (Section~\ref{sec:experiments-and-results}).
\end{enumerate}

\section{Related Works}\label{sec:related-works}

\subsection{Deep Learning for Mammography}\label{subsec:Deep-Learning-for-Mammography}
Deep learning has become the dominant approach for lesion detection and breast-level cancer prediction in full-field digital mammography (FFDM). Ribli \emph{et al.} proposed a Faster R-CNN system with a VGG-16 backbone that jointly detects and classifies benign versus malignant findings, achieving a breast-level AUC of approximately $0.85$ in the Digital Mammography DREAM Challenge and image-level AUC near $0.95$ on INbreast at high sensitivity with few false positives per image \cite{Ribli2018MammogramDL}. 

In a large head-to-head comparison involving roughly 45{,}000 images, Kooi \emph{et al.} showed that a CNN surpasses a mature, handcrafted-feature CAD system at low-sensitivity operating points and remains competitive at high sensitivity. Adding contextual cues such as lesion location, symmetry, and patient metadata further improved specificity, and a patch-level reader study found no significant difference between the network and certified screening radiologists \cite{Kooi2017MammogramCAD}. 

McKinney \emph{et al.} evaluated an AI system on representative UK and US screening cohorts and reported absolute reductions in false positives of 1.2\% (UK) and 5.7\% (US) and in false negatives of 2.7\% (UK) and 9.4\% (US) relative to routine clinical reads. Their study also demonstrated cross-population generalization (trained on UK, tested on US data) and an 88\% reduction in second-reader workload in a simulated UK double-reading workflow without degrading accuracy \cite{McKinney2020InternationalAI}.

\subsection{Multi-Scale and Morphological Feature Modeling}\label{subsec:Multi-Scale-and-Morphological-Feature-Modeling}
Before modern CNN-based systems, multiresolution descriptors—including wavelets, curvelets, and Gabor filters—were central to mammography CAD because they capture spiculation, architectural distortion, and fine microtexture across multiple scales and orientations. Eltoukhy \emph{et al.} compared discrete wavelet and curvelet representations on digitized mammograms and showed that directional multiscale atoms provide discriminatory power beyond raw intensity features \cite{Eltoukhy2010WaveletCurvelet}. Raghavendra \emph{et al.} combined Gabor wavelets with Locality Sensitive Discriminant Analysis to encode morphological and textural structure for automated identification of breast abnormalities \cite{RAGHAVENDRA2016151}.

More recently, hybrid architectures have incorporated frequency-domain priors directly into deep learning pipelines. Oyelade \emph{et al.} proposed a wavelet–CNN–wavelet framework that combines seam-carving and wavelet preprocessing with learnable wavelet-transform layers and GAN-based augmentation for architectural distortion, reporting improved classification performance on CBIS--DDSM and MIAS \cite{Oyelade2022WaveletCNN}. These methods illustrate continued interest in embedding multiscale and orientation-sensitive structure within mammographic analysis networks.

\subsection{Topological Data Analysis in Medical Imaging}\label{subsec:Topological-Data-Analysis-in-Medical-Imaging}
Topological Data Analysis (TDA) provides shape-based representations that are stable under deformations and complementary to intensity- and texture-based features. In medical image segmentation, Byrne \emph{et al.} introduced a persistent-homology-based topological loss that enforces multi-class anatomical plausibility (connectedness and loops/voids) via cubical complexes, improving topological correctness while preserving overlap-based metrics on 2D short-axis and 3D whole-heart cardiac MR images \cite{Byrne_2023}. 

For medical image classification, Peng \emph{et al.} proposed PHG-Net, in which cubical persistence diagrams are encoded using a PointNet-style module and fused with CNN/Transformer features through gated refinement and a joint topological–vision loss, yielding consistent gains across multiple medical image benchmarks \cite{10484262}. In the context of mammography, Malek \emph{et al.} employed persistent homology both as a noise-aware filter on persistence diagrams and as a vectorization source (e.g., persistent images and persistent entropy) for classifying microcalcification patches from the MIAS and DDSM datasets, demonstrating that persistence filtering and representation choices influence downstream classification performance \cite{Malek2023PersistentHomologyMammo}. Collectively, these studies establish persistent homology as a viable source of auxiliary structure for medical imaging tasks ranging from segmentation to classification.

\begin{figure*}[!t]
\centering
\includegraphics[width=.75\textwidth]{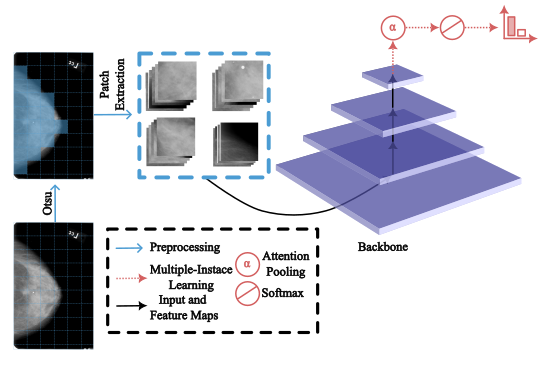}
\caption{During Stage~1, a mammogram is preprocessed using an Otsu-based breast mask and partitioned into fixed-size patches. These patches form the input to a patch-level multiple-instance learning (MIL) backbone that produces coarse image-level predictions. The resulting patch embeddings and breast-level scores serve as inputs to the downstream detection pipeline illustrated in Fig.~\ref{fig:method_framework_stage2}.}
\label{fig:method_framework_stage1}
\end{figure*}

\section{Methods}\label{sec:methods}

Let $I\in[0,1]^{m\times n}$ be a grayscale image on $\Omega=\{1,\ldots,m\}\times\{1,\ldots,n\}$, equipped with its standard cubical structure. For any threshold $\delta\in[0,1]$, the sublevel set
\[
K_\delta = \{(x,y)\in\Omega : I(x,y)\le \delta\}
\]
defines a nested sequence of cubical complexes $K_{\delta_1}\subseteq K_{\delta_2}\subseteq\cdots$. We compute cubical persistent homology over $\mathbb{Z}_2$ in dimensions $0$ and $1$, yielding persistence diagrams $D(I)$ consisting of birth--death pairs $(b,d)$ with $0\le b<d\le 1$. Classical stability results imply that if $\|I-\tilde I\|_\infty\le\varepsilon$, then
$d_B(D(I), D(\tilde I))\le\varepsilon$ and, more generally, $W_p^{(\infty)}(D(I), D(\tilde I))\le\varepsilon$ for all $p\in[1,\infty]$ \cite{CohenSteiner2007StabilityPD}.

In practice, persistence is computed on an aspect-preserving downsampled image $\widehat I$ (maximum side length $96$~pixels) to control memory and runtime. For $H_0$, we retain all finite bars except the dominant one corresponding to the global background component. For $H_1$, we apply a simple persistence-based truncation: letting $D_1(I)$ denote the $H_1$ diagram with lifetimes $\ell(b,d)=d-b$, we sort the bars by $\ell$ and keep the top $\lceil h_{1,\mathrm{pct}}\cdot |D_1(I)|\rceil$ elements for a user-specified retention level $h_{1,\mathrm{pct}}\in(0,1]$ (see ablations in Section~\ref{sec:experiments-and-results}). Because the stability bounds apply to $D(\widehat I)$, and the vectorization introduced below is Lipschitz with respect to diagram perturbations, small changes in the underlying persistence diagrams translate into controlled changes in the resulting spatial maps.

Figure~\ref{fig:method_framework_stage1} illustrates \emph{Stage~I}, a patch-level multiple-instance learning (MIL) model trained only on grayscale images. Each mammogram is min--max normalized to $[0,1]$ and embedded in a fixed grid. We then extract overlapping $224{\times}224$ patches that form a bag associated with a single breast-level label. A convolutional backbone processes each patch independently and the resulting patch scores are aggregated through a shallow feature-pyramid/MIL head to produce a single malignancy score for the image. Stage~I therefore learns a coarse spatial distribution of suspicious regions and a breast-level prior for malignancy, without using any topological information.

Figure~\ref{fig:method_framework_stage2} depicts \emph{Stage~II}, where we construct wavelet--persistence maps and inject them into a two-stage detector. For each mammogram, we compute cubical persistent homology in $H_0$ and $H_1$ on a downsampled version of the image, map the resulting birth--death pairs into a multi-level wavelet pyramid, and obtain spatial, multi-scale wavelet--persistence channels $T(X)$ aligned with the image grid. These channels are upsampled as needed and concatenated with the grayscale image to form the detector input, which is processed by a Faster~R--CNN-style architecture with a feature pyramid and RoI heads under an image-level MIL objective. In Stage~II, every proposal and classification decision is conditioned jointly on standard convolutional features and on the stable morphological structure captured by the wavelet--persistence maps.

\begin{figure*}[!t]
\centering
\includegraphics[width=\textwidth]{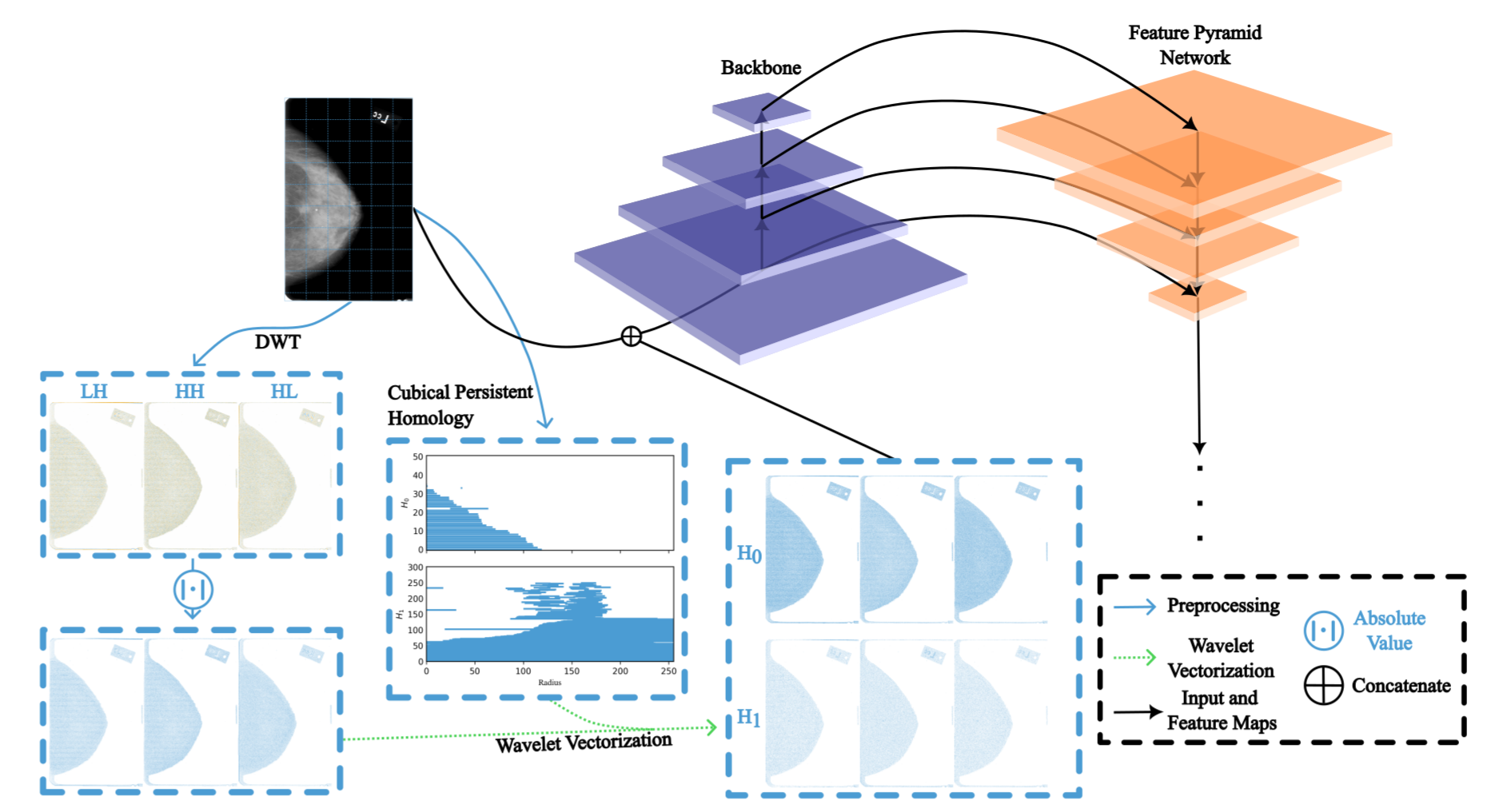}
\caption{During stage 2, the gated wavelet-persistence maps are upsampled and fused with backbone feature maps using feature-pyramid and RoI modules in a two-stage detection pipeline. Topological channels are injected by direct input-level concatenation, so detections and topological responses are evaluated on the same image domain.}
\label{fig:method_framework_stage2}
\end{figure*}

\subsection{Motivation Behind the Proposed Vectorization}

Deep learning models for mammography now achieve strong performance on curated, within--dataset benchmarks, yet their behavior under distribution shift remains difficult to characterize in a principled and reproducible way. In particular, widely used post--hoc attribution methods such as saliency maps exhibit weak sensitivity to model parameters, limited reproducibility, and localization behavior that often deviates substantially from task--specific segmenters, which constrains their reliability under domain shift \cite{Arun2021SaliencyTrustworthiness,Fuhrman2022ExplainableAIReview}. At the same time, clinical deployment requires that both predictions and their supporting cues remain interpretable when scanners, acquisition protocols, or patient populations change \cite{doi:10.1161/CIRCIMAGING.122.014519,10208407,GAO2023102889}. A broad range of counterfactual, prototype--based, and model--agnostic explanation methods has therefore been proposed, yet their empirical behavior remains strongly task-- and dataset--dependent \cite{Wang2024ScoreBasedCounterfactuals,Cai2024CounterfactualVQA,Gall_e_2023,HASSAN2025109569,Kothari2013HistologyShapeFeatures}.

Topological Data Analysis (TDA) provides descriptors of image morphology that are provably stable under bounded perturbations of the underlying intensity function. In the cubical setting, $\ell_\infty$--bounded perturbations of the image induce bounded bottleneck and Wasserstein perturbations of the corresponding persistence diagrams \cite{CohenSteiner2007StabilityPD}. Persistence diagrams, however, reside in an abstract birth--death space and do not directly encode spatial correspondence in the image plane. Common diagram vectorizations such as persistence landscapes and persistence images embed diagrams into finite--dimensional feature spaces with Lipschitz stability \cite{bubenik2015statisticaltopologicaldataanalysis,10.5555/3122009.3122017}, but these representations function primarily as global or regional descriptors rather than as spatially aligned image--plane signals.

Here, we construct a persistence--diagram vectorization that is explicitly aligned with a classical wavelet pyramid, producing spatial, multi--scale channels whose indices correspond to wavelet subbands rather than to abstract resolution levels. This coupling assigns persistent topological lifetimes to specific spatial locations and scales in the image. Wavelets are well matched to mammographic morphology, where diagnostically relevant structures such as microcalcifications, spiculated margins, and architectural distortion exhibit characteristic directional and multi--scale behavior. The resulting wavelet--persistence maps therefore encode stable morphological information in a form that is directly compatible with convolutional feature hierarchies.

These maps are used as auxiliary conditioning channels in a fixed two–stage mammography detector built on a single probe backbone (ConvNeXt–Tiny) and a deliberately limited training budget. This setup isolates the contribution of stable, structured morphological information by holding the architecture, loss, and optimization schedule constant. Under this protocol, the wavelet–persistence channels function as a morphology-aware input representation whose empirical value is evaluated solely under domain shift across acquisition regimes.

\subsection{Technical Details}

\subsubsection{Vectorization of the Persistence Diagram}
We now describe the wavelet-persistence vectorization that produces spatial, multi-scale channels from a persistence diagram.

Resize $I$ (aspect-preserving, with zero padding as needed) to $\widehat I\in[0,1]^{S\times S}$. A $J$-level orthogonal \texttt{Haar} transform yields the subbands
\[
\mathcal{P}_J=\Bigl\{\mathrm{LL}_J,\ (\mathrm{LH}_j,\mathrm{HL}_j,\mathrm{HH}_j)_{j=1}^J\Bigr\},
\]
with $\psi_{i,j}\in\mathbb{R}^{(S/2^j)\times(S/2^j)}$ and
$\mathrm{LL}_J\in\mathbb{R}^{(S/2^J)\times(S/2^J)}$.
To compare wavelet values with diagram coordinates $(b,d)\in[0,1]^2$, we fix a
monotone $1$-Lipschitz map $\tau:\mathbb{R}\to[0,1]$ and set
$\tilde\psi_{i,j}=\tau(\psi_{i,j})$. This aligns units and does not affect $D(\widehat I)$.

For $(b,d)\in(0,1)^2$ with $b<d$, $\varepsilon>0$, and a scalar
$\tilde\psi\in[0,1]$, define
\[
w(\tilde\psi;b,d)=
\begin{cases}
\dfrac{(\tilde\psi-b)(d-\tilde\psi)}{(d-b)+\varepsilon}, & \text{if } b\le \tilde\psi<d,\\[1ex]
0, & \text{otherwise.}
\end{cases}
\]
For each subband $\tilde\psi_{i,j}$ on $\Omega_{i,j}=\{1,\ldots,S/2^j\}^2$,
the per-pair map $W_{i,j}(\cdot;b,d)$ is obtained by pointwise application of $w$.
We aggregate by \emph{sum}:
\[
\mathsf{W}_{i,j}(D)\;=\;\sum_{(b,d)\in D} W_{i,j}(\cdot;b,d)\in\mathbb{R}^{\Omega_{i,j}}.
\]
The final wavelet-based representation stacks a subset of subbands; for example, for $J=3$ we may take
\[
\mathsf{W}(D)
=
\bigl(\mathsf{W}_{\alpha,j}(D)\bigr)_{\alpha\in\{\mathrm{LH},\mathrm{HL},\mathrm{HH}\},\, j\in\{2,3\}}
\;\cup\;
\bigl(\mathsf{W}_{\mathrm{LL},3}(D)\bigr).
\]
In our implementation, these wavelet-aligned maps are complemented by two additional baseline channels constructed directly from the original image (using the same gating function on $H_0$ and $H_1$ diagrams), composing eight topological channels in total (see Section \ref{sec:experiments-and-results}).

\subsubsection{Stability of the Vectorization}

\begin{lemma}\label{lem:partials}
Fix $\tilde\psi\in[0,1]$. On $\{(b,d):\, b<\tilde\psi<d\}$,
\[
\left|\frac{\partial w}{\partial b}\right|
=\frac{(d-\tilde\psi)^2}{(d-b+\varepsilon)^2}\le 1,
\qquad
\left|\frac{\partial w}{\partial d}\right|
=\frac{(\tilde\psi-b)^2}{(d-b+\varepsilon)^2}\le 1,
\]
and $w\equiv 0$ on the complement of $\{(b,d):\, b\le\tilde\psi<d\}$.
Moreover, $w$ extends continuously to $[0,1]^2$ with $w(\tilde\psi;b,b)=0$.
\end{lemma}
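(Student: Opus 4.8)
The plan is to freeze $\tilde\psi\in[0,1]$ and treat $w$ as a function of $(b,d)$, working separately on the open gating region $U=\{(b,d):b<\tilde\psi<d\}$ and on its complement in $[0,1]^2$. On $U$ the denominator satisfies $(d-b)+\varepsilon\ge\varepsilon>0$, so $w=N/M$ with $N=(\tilde\psi-b)(d-\tilde\psi)$ and $M=(d-b)+\varepsilon$ is a ratio of polynomials with non-vanishing denominator, hence $C^\infty$ on $U$, and the partials come from the quotient rule. I would record $\partial_b N=-(d-\tilde\psi)$, $\partial_b M=-1$, $\partial_d N=\tilde\psi-b$, $\partial_d M=1$, substitute into $(\partial N\cdot M-N\cdot\partial M)/M^2$, and collect terms; the key algebraic fact is that the numerator of $\partial_b w$ factors through $(d-\tilde\psi)$ and the numerator of $\partial_d w$ through $(\tilde\psi-b)$, so each partial is, up to sign, a product of two nonnegative factors over $M^2$.

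For the bounds I would use the strict order relations that hold precisely on $U$: since $\tilde\psi$ lies strictly between $b$ and $d$, we have $0<d-\tilde\psi<d-b\le(d-b)+\varepsilon=M$ and $0<\tilde\psi-b<d-b\le M$, and also $0<(d-\tilde\psi)+\varepsilon\le M$ and $0<(\tilde\psi-b)+\varepsilon\le M$. Each factor appearing in the numerator of $\partial_b w$ (resp. $\partial_d w$) is therefore at most $M$, so $|\partial_b w|\le 1$ and $|\partial_d w|\le 1$ on $U$. On the complement of $\{b\le\tilde\psi<d\}$ the function $w$ is identically $0$ by definition, hence trivially smooth with vanishing partials there.

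The only point that needs genuine care is reconciling the two pieces along $\partial U$, i.e. establishing the continuous extension to all of $[0,1]^2$. For this I would bound $w$ on $U$ directly: by the arithmetic–geometric mean inequality, $(\tilde\psi-b)(d-\tilde\psi)\le\tfrac14(d-b)^2$, so $|w|\le (d-b)^2/(4\varepsilon)$, which forces $w\to 0$ as $(b,d)$ approaches any point of the diagonal $\{b=d\}$; in particular $w$ extends by the value $0$ there, giving $w(\tilde\psi;b,b)=0$. Continuity across the remaining faces $\{b=\tilde\psi\}$ and $\{d=\tilde\psi\}$ follows because the numerator $N$ carries the factors $(\tilde\psi-b)$ and $(d-\tilde\psi)$ respectively, each of which vanishes on the corresponding face while $M$ stays bounded below by $\varepsilon$, so the limiting value from $U$ matches the constant $0$ coming from the complement. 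Patching these limits yields a continuous function on $[0,1]^2$. I do not expect a real obstacle here beyond organizing the quotient-rule algebra and checking the boundary faces one at a time; the substantive content is just the observation that the gating weight decays to zero both as the interval $(b,d)$ collapses and as $\tilde\psi$ leaves $[b,d)$, which is exactly what makes the piecewise definition continuous rather than merely locally smooth.
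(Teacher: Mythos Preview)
Your proposal is correct and follows essentially the same route as the paper: quotient rule on $U$, bound each partial by comparing the numerator factors to $M=d-b+\varepsilon$, note $w\equiv 0$ off the gating strip, and check continuity across the boundary faces where the numerator vanishes. Your version is in fact slightly more careful---you track the $\varepsilon$-shifted factors $(d-\tilde\psi+\varepsilon)$ and $(\tilde\psi-b+\varepsilon)$ that actually appear in the exact partials (the paper's displayed equalities silently drop these, though the bound $\le 1$ is unaffected), and your AM--GM estimate makes the diagonal limit explicit where the paper simply asserts it.
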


\begin{proof}
Inside $b<\tilde\psi<d$, write
$w=\bigl((\tilde\psi-b)(d-\tilde\psi)\bigr)/(d-b+\varepsilon)$.
Differentiate:
\[
\frac{\partial w}{\partial b}
=\frac{-(d-\tilde\psi)(d-b+\varepsilon)-(\tilde\psi-b)(-1)}{(d-b+\varepsilon)^2}
=-\,\frac{(d-\tilde\psi)^2}{(d-b+\varepsilon)^2},
\]
and
\[
\frac{\partial w}{\partial d}
=\frac{(\tilde\psi-b)(d-b+\varepsilon)-(\tilde\psi-b)(1)}{(d-b+\varepsilon)^2}
=\frac{(\tilde\psi-b)^2}{(d-b+\varepsilon)^2}.
\]
Since $0\le \tilde\psi-b\le d-b$ and $0\le d-\tilde\psi\le d-b$, each ratio is
$\le 1$. Outside the strip $b\le\tilde\psi<d$, $w\equiv 0$. At $b=\tilde\psi$
or $\tilde\psi\uparrow d$ the numerator vanishes, so continuity holds; for
$b=d$ we define $w=0$, matching the limit as $\varepsilon>0$ is fixed.
\end{proof}

\begin{corollary}\label{cor:lipschitz}
For any $(b,d),(b',d')\in[0,1]^2$ and fixed $\tilde\psi\in[0,1]$,
\[
\bigl|w(\tilde\psi;b,d)-w(\tilde\psi;b',d')\bigr|
\ \le\
\begin{cases}
\ \|(b,d)-(b',d')\|_{1},\\[0.25ex]
\ \sqrt{2}\,\|(b,d)-(b',d')\|_{2},\\[0.25ex]
\ 2\,\|(b,d)-(b',d')\|_{\infty}.
\end{cases}
\]
\end{corollary}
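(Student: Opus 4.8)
The plan is to reduce all three inequalities to a single one–dimensional Lipschitz estimate extracted from Lemma~\ref{lem:partials}, and then move between the $\ell_1$, $\ell_2$, and $\ell_\infty$ norms on $\mathbb{R}^2$ by elementary norm comparisons, so that no further analysis of $w$ is needed after the first bound.

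First I would prove the $\ell_1$ bound by an axis–aligned two–step decomposition. Writing
$|w(\tilde\psi;b,d)-w(\tilde\psi;b',d')|\le |w(\tilde\psi;b,d)-w(\tilde\psi;b',d)|+|w(\tilde\psi;b',d)-w(\tilde\psi;b',d')|$,
it suffices to bound each term by the change in the single coordinate that varies. Consider the slice $b\mapsto w(\tilde\psi;b,d)$ with $d$ fixed. By Lemma~\ref{lem:partials} this slice is $C^1$ on the open set where $b<\tilde\psi<d$ with $|\partial w/\partial b|\le 1$ there, it is identically $0$ once $b\ge\tilde\psi$ or $d\le\tilde\psi$, and by the continuous extension supplied in Lemma~\ref{lem:partials} it is continuous on all of $[0,1]$ (in particular $w=0$ at $b=\tilde\psi$ and at $b=d$). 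A function on an interval that is continuous and piecewise $C^1$ with derivative bounded by $1$ is $1$–Lipschitz, so $|w(\tilde\psi;b,d)-w(\tilde\psi;b',d)|\le|b-b'|$; the identical argument in the $d$ variable, using $|\partial w/\partial d|\le1$, gives $|w(\tilde\psi;b',d)-w(\tilde\psi;b',d')|\le|d-d'|$. Adding these yields $|w(\tilde\psi;b,d)-w(\tilde\psi;b',d')|\le|b-b'|+|d-d'|=\|(b,d)-(b',d')\|_1$.

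The remaining two inequalities then follow with no further use of $w$: for any $x\in\mathbb{R}^2$ one has $\|x\|_1\le\sqrt{2}\,\|x\|_2$ (Cauchy--Schwarz) and $\|x\|_1\le 2\,\|x\|_\infty$, applied with $x=(b,d)-(b',d')$. Alternatively, one could prove all three at once by parametrizing the segment $\gamma(t)=(1-t)(b,d)+t(b',d')$, noting that $t\mapsto w(\tilde\psi;\gamma(t))$ is continuous and piecewise $C^1$, and bounding its derivative where it exists by H\"older's inequality against each dual pair, using $\|\nabla w\|_\infty\le1$, $\|\nabla w\|_2\le\sqrt2$, and $\|\nabla w\|_1\le2$ from Lemma~\ref{lem:partials}; integrating over $t\in[0,1]$ gives the constants $1$, $\sqrt2$, and $2$ respectively.

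The only delicate point is that Lemma~\ref{lem:partials} supplies the derivative bound only on the \emph{open} strip $\{b<\tilde\psi<d\}$, so one must check that this upgrades to a genuine Lipschitz constant across the kink where $w$ drops to $0$. This is handled entirely by the continuity of the extension: on each maximal subinterval of a slice (or of the segment parametrization) where $w$ is smooth the mean value theorem gives the slope bound, and across the at most finitely many breakpoints — the lines $b=\tilde\psi$, $d=\tilde\psi$, $b=d$, on each of which $w$ is in fact $0$ — one compares through the shared value $w=0$; in the degenerate case where the segment lies inside one of those lines, $w$ is constant and the bound is trivial. I expect this boundary bookkeeping, rather than any computation, to be the main thing to get right.
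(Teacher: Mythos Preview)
Your proposal is correct. Your primary argument differs from the paper's: you reach the $\ell_1$ bound by an axis-aligned triangle inequality through $(b',d)$, reducing to two one-dimensional Lipschitz estimates on slices, and then obtain the $\ell_2$ and $\ell_\infty$ bounds purely by norm comparison $\|x\|_1\le\sqrt{2}\,\|x\|_2$ and $\|x\|_1\le 2\,\|x\|_\infty$. The paper instead goes straight to the path-integral argument you list as your alternative: it parametrizes the segment, bounds $|w(\gamma(1))-w(\gamma(0))|\le\int\|\nabla w\|_\infty\|(\dot b,\dot d)\|_1\,dt$, and then reads off the other two constants from $\|\nabla w\|_2\le\sqrt{2}$ and $\|\nabla w\|_1\le 2$ via H\"older duality. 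Your route has the advantage of being more explicit about the piecewise-$C^1$ structure across the kinks $b=\tilde\psi$, $d=\tilde\psi$, $b=d$, which the paper's proof passes over without comment; the paper's route is slightly shorter once one is willing to treat the gradient bound as holding ``where $w$ is nonzero'' and invoke continuity implicitly. The constants and the logical dependence on Lemma~\ref{lem:partials} are identical in both.
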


\begin{proof}
By Lemma~\ref{lem:partials}, on the region where $w$ is nonzero,
$\|\nabla_{(b,d)}w\|_{\infty}\le 1$. Duality of norms gives the Lipschitz
constants: for a differentiable path connecting $(b,d)$ to $(b',d')$,
\[
|w(\cdot)-w(\cdot)|
\le \int \|\nabla w\|_{\infty}\,\|(\dot b,\dot d)\|_{1}\,dt
\le \|(b,d)-(b',d')\|_{1}.
\]
Since $\|\nabla w\|_2\le \sqrt{2}\|\nabla w\|_\infty\le \sqrt{2}$ and
$\|\nabla w\|_1\le 2\|\nabla w\|_\infty\le 2$, the $\ell_2$ and $\ell_\infty$
bounds follow similarly.
\end{proof}

\begin{theorem}\label{thm:global}
Let $\|\cdot\|_{F}$ denote the Frobenius norm and
$|\Omega_{i,j}|=(S/2^j)^2$. For $p\in\{1,2,\infty\}$, let
$L_p\in\{1,\sqrt{2},2\}$ be the Lipschitz constant from Corollary~\ref{cor:lipschitz}.
For any two diagrams $D,D'$ (in a fixed homological dimension),
\[
\bigl\|\mathsf{W}_{i,j}(D)-\mathsf{W}_{i,j}(D')\bigr\|_{F}
\ \le\ L_p\,\sqrt{|\Omega_{i,j}|}\;\,W_{1}^{(p)}(D,D'),
\]
where $W_1^{(p)}$ is the $1$-Wasserstein distance on diagrams with ground cost
$\|(b,d)-(b',d')\|_{p}$.
\end{theorem}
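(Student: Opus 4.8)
The plan is to reduce the Frobenius-norm inequality to a single pointwise (per-pixel) estimate, and to obtain that pointwise estimate from Corollary~\ref{cor:lipschitz} by transporting mass along a near-optimal matching between $D$ and $D'$ and then summing in $\ell^2$ over the subband grid.

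First I would fix a pixel location $u\in\Omega_{i,j}$ and write $\tilde\psi=\tilde\psi_{i,j}(u)\in[0,1]$, so that $\mathsf{W}_{i,j}(D)(u)=\sum_{(b,d)\in D}w(\tilde\psi;b,d)$ and likewise for $D'$. Let $\gamma$ be a partial matching between $D$ and $D'$ achieving the infimum in $W_1^{(p)}(D,D')$ up to an arbitrary $\eta>0$, completed so that every unmatched point of either diagram is paired with its orthogonal projection onto the diagonal (this does not increase the cost). By Lemma~\ref{lem:partials}, $w$ extends continuously to $[0,1]^2$ with $w(\tilde\psi;b,b)=0$, so a point matched to the diagonal contributes $0$ to the corresponding sum. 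Hence the difference telescopes along the matching:
\[
\mathsf{W}_{i,j}(D)(u)-\mathsf{W}_{i,j}(D')(u)=\sum_{(x,x')\in\gamma}\bigl(w(\tilde\psi;x)-w(\tilde\psi;x')\bigr),
\]
where $w(\tilde\psi;\cdot)$ is read as $0$ on any diagonal argument.

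Next I would apply the triangle inequality and Corollary~\ref{cor:lipschitz} term by term. Since the Lipschitz constant $L_p$ furnished by the corollary does not depend on $\tilde\psi$, and the corollary's estimate is valid for all pairs in $[0,1]^2$ (including the diagonal projections, by the continuous extension),
\[
\bigl|\mathsf{W}_{i,j}(D)(u)-\mathsf{W}_{i,j}(D')(u)\bigr|\le\sum_{(x,x')\in\gamma}L_p\,\|x-x'\|_p\le L_p\bigl(W_1^{(p)}(D,D')+\eta\bigr).
\]
Letting $\eta\downarrow 0$ gives $\bigl|\mathsf{W}_{i,j}(D)(u)-\mathsf{W}_{i,j}(D')(u)\bigr|\le L_p\,W_1^{(p)}(D,D')$, and this bound is uniform over $u\in\Omega_{i,j}$. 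Squaring, summing over the $|\Omega_{i,j}|$ pixels of the subband, and taking square roots yields the claimed estimate $\|\mathsf{W}_{i,j}(D)-\mathsf{W}_{i,j}(D')\|_F\le L_p\sqrt{|\Omega_{i,j}|}\;W_1^{(p)}(D,D')$.

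The only genuinely delicate point is the bookkeeping at the diagonal: one must check that the matching underlying $W_1^{(p)}$ can be completed by diagonal projections without increasing its cost, and that $w(\tilde\psi;\cdot)$ vanishes on those projections so that the unmatched (``extra'') points of whichever diagram is larger drop out of the difference cleanly — this is precisely what the continuous extension with $w(\tilde\psi;b,b)=0$ in Lemma~\ref{lem:partials} provides. Everything else is a termwise invocation of Corollary~\ref{cor:lipschitz} together with the elementary $\ell^2$-over-pixels step.
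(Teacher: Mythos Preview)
Your argument is correct and follows essentially the same route as the paper's proof: pick an optimal matching, apply Corollary~\ref{cor:lipschitz} termwise, and convert the uniform pointwise bound into a Frobenius bound via the factor $\sqrt{|\Omega_{i,j}|}$. The only cosmetic difference is the order of the two sums---the paper first applies the triangle inequality in $\|\cdot\|_F$ over the matching and then bounds each $\|W_{i,j}(\cdot;b,d)-W_{i,j}(\cdot;b',d')\|_F$ pixelwise, whereas you fix a pixel first and sum over the matching afterward; your treatment of the diagonal bookkeeping and the $\eta$-approximate matching is in fact more explicit than the paper's.
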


\begin{proof}
Let $\gamma$ be an optimal matching for $W_1^{(p)}(D,D')$. By the triangle
inequality and linearity of the sum aggregation,
\[
\begin{split}
\bigl\|\mathsf{W}_{i,j}(D)-\mathsf{W}_{i,j}(D')\bigr\|_{F}
\;\le\;
\sum_{((b,d),(b',d'))\in\gamma} \\
\bigl\|W_{i,j}(\cdot;b,d)-W_{i,j}(\cdot;b',d')\bigr\|_{F}.
\end{split}
\]
Pointwise, Corollary~\ref{cor:lipschitz} gives
$|W_{i,j}(x;b,d)-W_{i,j}(x;b',d')|\le L_p\,\|(b,d)-(b',d')\|_{p}$ for every
pixel $x\in\Omega_{i,j}$. Hence
$\|W_{i,j}(\cdot;b,d)-W_{i,j}(\cdot;b',d')\|_{F}
\le L_p\,\sqrt{|\Omega_{i,j}|}\,\|(b,d)-(b',d')\|_{p}$.
Summing over $\gamma$ yields the claim.
\end{proof}

\begin{corollary}\label{cor:stack}
Let $\|\cdot\|_2$ be the Euclidean norm on the direct sum of selected subbands
and set $C_{J,p}=L_p\bigl(\sum_{(i,j)\in\mathcal{S}}|\Omega_{i,j}|\bigr)^{1/2}$,
where $\mathcal{S}$ is the index set of stacked subbands. Then
\[
\begin{split}
\bigl\|\mathsf{W}(D)-\mathsf{W}(D')\bigr\|_{2}
\;\le\;
C_{J,p}\;W_{1}^{(p)}(D,D') \\
\text{for }p\in\{1,2,\infty\}.
\end{split}
\]
\end{corollary}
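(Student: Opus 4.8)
The plan is to reduce Corollary~\ref{cor:stack} directly to Theorem~\ref{thm:global}. By construction $\mathsf{W}(D)$ is the concatenation of the per-subband maps $\mathsf{W}_{i,j}(D)$ over the fixed index set $\mathcal{S}$, and the Euclidean norm on a direct sum decomposes as a sum of the squared component (Frobenius) norms:
\[
\bigl\|\mathsf{W}(D)-\mathsf{W}(D')\bigr\|_2^2
=\sum_{(i,j)\in\mathcal{S}}\bigl\|\mathsf{W}_{i,j}(D)-\mathsf{W}_{i,j}(D')\bigr\|_F^2 .
\]
So it suffices to control each summand by Theorem~\ref{thm:global} and add.

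First I would apply Theorem~\ref{thm:global} to each $(i,j)\in\mathcal{S}$ in turn; the key point is that its bound $L_p\sqrt{|\Omega_{i,j}|}\,W_1^{(p)}(D,D')$ depends on the subband only through $|\Omega_{i,j}|$, because the ground cost $\|(b,d)-(b',d')\|_p$ — and hence the optimal matching realizing $W_1^{(p)}(D,D')$ — is common to all subbands. Squaring and summing over $\mathcal{S}$ gives
\[
\bigl\|\mathsf{W}(D)-\mathsf{W}(D')\bigr\|_2^2
\le L_p^2\Bigl(\sum_{(i,j)\in\mathcal{S}}|\Omega_{i,j}|\Bigr)\bigl(W_1^{(p)}(D,D')\bigr)^2 ,
\]
and taking square roots yields $\bigl\|\mathsf{W}(D)-\mathsf{W}(D')\bigr\|_2\le C_{J,p}\,W_1^{(p)}(D,D')$ with $C_{J,p}=L_p(\sum_{(i,j)\in\mathcal{S}}|\Omega_{i,j}|)^{1/2}$, exactly the stated constant, for each $p\in\{1,2,\infty\}$.

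There is no substantive obstacle here: all the analytic content already sits in Theorem~\ref{thm:global}, and the stacking step is just the Pythagorean identity for block vectors together with factoring out the common scalar $L_p^2\,(W_1^{(p)})^2$. The only points worth an explicit line are that the stacked norm genuinely splits into per-subband Frobenius norms (immediate from defining $\mathsf{W}$ as a stack) and that a single $W_1^{(p)}$ matching serves every subband. If one additionally folds in the two image-plane baseline channels described after the vectorization, each is again a diagram-indexed sum of pointwise gating applications, so Corollary~\ref{cor:lipschitz} and Theorem~\ref{thm:global} apply to them verbatim; they simply enlarge the index set (and hence $C_{J,p}$) by their own pixel counts, leaving the argument otherwise unchanged.
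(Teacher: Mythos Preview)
Your proposal is correct and matches the paper's intended argument: the corollary is stated without an explicit proof precisely because it follows immediately from Theorem~\ref{thm:global} via the Pythagorean decomposition of the stacked Euclidean norm into per-subband Frobenius norms, exactly as you lay out. Your remarks that a single optimal matching serves all subbands and that the baseline channels only enlarge $\mathcal{S}$ are the right clarifications and do not depart from the paper's approach.
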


The regularizer $\varepsilon>0$ prevents blow-up near the diagonal $b=d$ and tightens the derivative bounds uniformly. If $D=\varnothing$, then $\mathsf{W}_{i,j}(D)\equiv 0$ by the convention that the sum over the empty set is $0$. Combined with the classical bottleneck and Wasserstein stability of persistence diagrams under $\|\cdot\|_\infty$ perturbations of the image \cite{CohenSteiner2007StabilityPD}, Corollary~\ref{cor:stack} establishes that the stacked wavelet--persistence maps depend in a globally Lipschitz manner on small perturbations of the underlying persistence diagrams.

\subsection{Implementation Details}
Models are implemented in PyTorch with ImageNet-pretrained backbones. Multiscale transforms use PyWavelets, and persistent homology uses GUDHI. All experiments use a fixed, small training budget: Stage~I runs for 625 iterations with batch size $8$, and Stage~II runs for 100 iterations with batch size $1$, identical across all ablations. Optimization uses Adam with learning rate $10^{-4}$. Code for the wavelet-persistence vectorization and all experiments is available at \href{https://github.com/cfanning8/WaveletVectorization}{github.com/cfanning8/WaveletVectorization}.

\section{Experiments and Results}\label{sec:experiments-and-results}

\subsection{Datasets}\label{sec:datasets}

We evaluate on three public mammography cohorts spanning digitized film and full-field digital mammography (Table~\ref{tab:cohorts}). CBIS--DDSM \cite{SawyerLee2016CBISDDSM}, collected in the United States, contains 1,566 patients and 3,103 images and is used for training and validation under an 80/20 patient-level split stratified by malignancy, with no subject overlap. INbreast \cite{moreira2012inbreast}, collected in Portugal, contains 108 patients and 410 images and serves as the primary external test set. CMMD \cite{Cui2021CMMD}, collected in China, contains 1,534 patients and 4,798 images and serves as a secondary external test set for cross-population and cross-acquisition evaluation. All images are processed in native grayscale and normalized per image to $[0,1]$.

\begin{table}[t]
\centering
\footnotesize
\caption{Summary of mammography cohorts.}
\label{tab:cohorts}
\renewcommand{\arraystretch}{1.12}

\begin{tabular}{@{}l c@{}}
\hline
\textbf{INbreast} & \textbf{Value} \\
\hline
Country & Portugal \\
Modality & FFDM \\
Patients & 108 \\
Images & 410 \\
Usage & Testing I \\
Resolution & 3328$\times$2560 \\
\hline
\end{tabular}

\vspace{2mm}

\begin{tabular}{@{}l c@{}}
\hline
\textbf{CBIS--DDSM} & \textbf{Value} \\
\hline
Country & USA \\
Modality & Digitized Film \\
Patients & 1{,}566 \\
Images & 3{,}103 \\
Usage & Training and Validation \\
Resolution & 5086$\times$3036 \\
\hline
\end{tabular}

\vspace{2mm}

\begin{tabular}{@{}l c@{}}
\hline
\textbf{CMMD} & \textbf{Value} \\
\hline
Country & China \\
Modality & FFDM \\
Patients & 1{,}534 \\
Images & 4{,}798 \\
Usage & Testing II \\
Resolution & 2294$\times$1914 \\
\hline
\end{tabular}

\vspace{1mm}
\begin{minipage}{0.98\linewidth}
\end{minipage}
\end{table}

\subsection{Training Protocol and Architecture}\label{sec:training-protocol}

All models are implemented in PyTorch using an ImageNet--pretrained ConvNeXt--Tiny backbone. Training follows a fixed two--stage curriculum with no architecture changes, no schedule tuning, and no domain adaptation.

In Stage~I, patch--level multiple--instance learning operates on $224{\times}224$ grayscale patches with batch size $8$ for $625$ iterations on CBIS--DDSM. Patients with malignant and benign labels are sampled with equal probability to control class imbalance. This stage produces patch--level malignancy scores and a coarse image--level prior.

In Stage~II, RoI--level detection with image--level MIL fine--tunes a Faster~R--CNN detector with the same ConvNeXt--Tiny backbone under a fixed budget of $K{=}100$ iterations on CBIS--DDSM with batch size $1$. Optimization uses Adam with learning rate $10^{-4}$ and parameters $(\beta_1,\beta_2)=(0.9,0.999)$. These iteration counts, optimizer settings, and data splits remain fixed across all experiments.

Each mammogram $X$ is min--max normalized to $[0,1]$. When topology is enabled, cubical persistence diagrams in $H_0$ and $H_1$ are computed on wavelet--transformed versions of $X$, and the wavelet--persistence vectorization $\mathcal{W}$ from Section~\ref{sec:methods} produces a topological map
\[
T(X)\in\mathbb{R}^{8\times H\times W},
\]
with six wavelet--persistence channels and two baseline persistence channels. The detector input is formed by direct channel concatenation
\[
Z(X)=[X\Vert T(X)]\in\mathbb{R}^{9\times H\times W},
\]
and replaces the single--channel grayscale input at the backbone input. When topology is disabled, the detector receives $X$ alone. No other changes are made to the detection architecture.

For auxiliary analysis only, we define a pooled topological embedding
\[
z_{\text{topo}}(X)\in\mathbb{R}^8
\]
by spatial averaging of the eight channels of $T(X)$ with patient--level aggregation. This vector is used exclusively for linear probes and embedding comparisons and does not enter the detector. By Theorem~\ref{thm:global} and Corollary~\ref{cor:stack}, the mapping $T$ is globally Lipschitz with respect to $\ell_\infty$ perturbations of $X$.

\subsection{Evaluation}\label{sec:evaluation-protocol}

All splits are defined at the patient level. CBIS--DDSM is divided into an 80/20 train-validation split stratified by malignancy. INbreast and CMMD are held out entirely as external test sets and are never used for tuning.

For each model, view--level scores are aggregated to the patient level by taking the maximum score across both breasts and both standard views (CC and MLO), so that any positive view or breast yields a positive patient score. We report ROC--AUC and, at a single fixed operating point, sensitivity, specificity, and accuracy. The operating threshold is selected once on the CBIS--DDSM validation split by maximizing Youden’s index and is then frozen for evaluation on INbreast and CMMD.

Uncertainty is estimated using 10{,}000 patient--wise bootstrap resamples, with 95\% confidence intervals computed from percentile bounds. For selected model pairs, we also bootstrap the AUC difference to assess whether the topology--conditioned variant outperforms its grayscale counterpart.

\subsection{Malignancy Classification}

Each mammogram is min--max normalized to $[0,1]$ prior to processing. When topology is enabled, the grayscale image is augmented with eight wavelet--persistence channels, producing a nine--channel input formed by simple channel concatenation. The topological channels are rescaled to $[0,1]$ and used directly as auxiliary image features.

In all experiments, topological information is injected only at the input: the detector backbone receives either the single--channel grayscale image $X$ or the nine--channel tensor $Z(X) = [X \Vert T(X)]$ formed by concatenating $X$ with the wavelet--persistence maps $T(X)$. No FiLM modulation of intermediate feature maps or RoI--level topological descriptors are used in the reported pipeline.

Wavelet--persistence maps are constructed using orthogonal \texttt{Haar} and \texttt{Daubechies} wavelets of orders two and four, with decomposition depths one through three and periodization boundary conditions. Cubical persistent homology is computed in dimensions $H_0$ and $H_1$. All $H_0$ pairs are retained except for the single global background component. For $H_1$, only a fixed fraction of the lowest--persistence pairs is preserved, using retention levels of 10\%, 25\%, and 50\%. A smooth gating threshold of $\varepsilon=10^{-6}$ is used throughout.

\subsubsection{Wavelet--Persistence Ablations on CMMD}

We assess the sensitivity of the wavelet--persistence construction on CMMD using a ConvNeXt--Tiny backbone in Stage~II under a fixed training protocol. All models are trained exclusively on CBIS--DDSM, evaluated on CMMD at the patient level, and use the same operating threshold selected on the CBIS--DDSM validation split. Table~\ref{tab:ablation_cmmd_auc} reports patient--level AUC for controlled variations in wavelet family, decomposition depth, and $H_1$ keep-percentage.

\begin{table}[t]
\centering
\caption{Wavelet--persistence ablations on CMMD at the patient level.}
\label{tab:ablation_cmmd_auc}
\renewcommand{\arraystretch}{1.15}
\begin{tabular}{@{}l c@{}}
\hline
\textbf{Ablation} & \textbf{AUC} \\
\hline
\multicolumn{2}{@{}l}{\emph{Wavelet family}} \\
\hline
\texttt{haar}  & $0.6073 \pm 0.0319$ \\
\texttt{db2}   & $0.5817 \pm 0.0328$ \\
\texttt{db4}   & $0.6045 \pm 0.0331$ \\
\hline
\multicolumn{2}{@{}l}{\emph{Wavelet depth}} \\
\hline
$J=1$ & $0.5976 \pm 0.0319$ \\
$J=2$ & $0.6291 \pm 0.0329$ \\
$J=3$ & $0.5749 \pm 0.0320$ \\
\hline
\multicolumn{2}{@{}l}{\emph{$H_1$ keep-percentage}} \\
\hline
$h_1 = 0.10$ & $0.5832 \pm 0.0320$ \\
$h_1 = 0.25$ & $0.6021 \pm 0.0328$ \\
$h_1 = 0.50$ & $0.6274 \pm 0.0319$ \\
\hline
\end{tabular}
\end{table}

As shown in Table~\ref{tab:ablation_cmmd_auc}, AUC varies only weakly across wavelet families, with \texttt{haar} and \texttt{db4} performing similarly and \texttt{db2} consistently lower. Decomposition depth and $H_1$ retention exhibit a stronger effect: $J{=}2$ achieves the highest AUC among the tested depths, and increasing the $H_1$ keep-percentage produces a monotonic increase in performance, with the maximum attained at $h_1{=}0.50$. Under this fixed transfer protocol, external AUC on CMMD is therefore far more sensitive to scale selection and $H_1$ mass retention than to the choice of orthogonal wavelet family.

\subsection{Topology--Only Linear Probes}\label{sec:experiments-linear-probes}

We next ask whether the stable embedding $z_{\text{topo}}(X)$ carries standalone information about malignancy, independent of any detector. For each patient we compute $z_{\text{topo}}(X)\in\mathbb{R}^8$ on CBIS--DDSM and fit a logistic regression classifier
\[
\hat{y} = \sigma\big(w^\top z_{\text{topo}}(X) + b\big)
\]
trained on CBIS--DDSM and evaluated on INbreast.

On INbreast, the topology--only linear probe attains a patient--level AUC of $0.61 \pm 0.11$. This shows that the pooled wavelet--persistence representation already carries a weakly discriminative geometry for malignancy in this cohort, without any learned visual features.

\subsection{Architecture-Topology Compatibility}\label{sec:experiments-architecture-compatibility}

Table~\ref{tab:convnext_main} reports patient--level AUC for ConvNeXt--Tiny on INbreast with and without wavelet--persistence conditioning under an identical training budget. Without topology, ConvNeXt--Tiny operates near chance ($0.55 \pm 0.11$). With wavelet--persistence channels concatenated at the input, AUC increases sharply to $0.75 \pm 0.09$. This shift moves the model from negligible discrimination to clearly nontrivial patient--level performance without any change to the backbone, loss, optimizer, or training schedule.

\begin{table}[t]
\centering
\caption{ConvNeXt--Tiny patient--level AUC on INbreast.}
\label{tab:convnext_main}
\renewcommand{\arraystretch}{1.05}
\begin{tabular}{@{}l c@{}}
\hline
\textbf{Input} & \textbf{INbreast} \\
\hline
Grayscale      & $0.55 \pm 0.11$ \\
Topology       & $0.75 \pm 0.09$ \\
\hline
\end{tabular}
\end{table}

This experiment isolates the effect of stable topological side information under fixed optimization and fixed source--domain training. Because the architecture, data splits, and iteration budget are held constant, the observed gain reflects a genuine interaction between ConvNeXt--Tiny’s feature hierarchy and the spatial, multi--scale morphology encoded by the wavelet--persistence maps. All subsequent analyses, therefore, treat ConvNeXt--Tiny as a controlled probe architecture for studying how persistent--homology--based conditioning reshapes early--regime external generalization.

\subsection{Topological Embedding Distances Across Cohorts}\label{sec:experiments-domain-shift}

For each cohort, we compute the pooled topological embedding $z_{\text{topo}}(X) \in \mathbb{R}^8$ at the patient level and measure the Wasserstein--2 distance between its empirical distribution and that of CBIS--DDSM. In this embedding, the distance from CBIS--DDSM to INbreast is $0.46$, and the distance from CBIS--DDSM to CMMD is $0.31$.

\section{Discussion}\label{sec:discussion}

The experiments in Section~\ref{sec:experiments-and-results} assess wavelet--persistence conditioning under a fixed training schedule, a fixed source domain (CBIS--DDSM), and a single controlled probe architecture (ConvNeXt--Tiny), with external evaluation on INbreast and CMMD. Within this setting, the effect of the auxiliary channels is cohort dependent.

On INbreast, patient-level AUC increases from $0.55\pm0.11$ to $0.75\pm0.09$ under identical Stage~I and Stage~II optimization budgets. This shift moves ConvNeXt--Tiny from near-chance discrimination to clearly nontrivial performance without modifying the backbone, loss, optimizer, or schedule. The improvement arises strictly from adding the wavelet--persistence maps at the input and shows that the backbone can use stable, multiscale morphological information effectively in early training.

CMMD highlights a different aspect of the construction. The controlled ablations show that external AUC on this cohort is most sensitive to the choice of wavelet depth and to the mass retained in $H_1$, with comparatively smaller variation across wavelet families. In the pooled topological embedding, INbreast and CMMD occupy different geometric positions relative to CBIS--DDSM, reflecting differences in acquisition process and image texture. Together, these results indicate that the utility of the topological channels is shaped by how their induced morphology interacts with cohort-specific image characteristics.

Although the vectorization $\mathsf{W}$ is globally Lipschitz with respect to Wasserstein perturbations of persistence diagrams (Theorem~\ref{thm:global}, Corollary~\ref{cor:stack}), the diagrams themselves depend on the image-formation process. Because acquisition regimes differ in their underlying image statistics, stability governs sensitivity to perturbations within a given regime but does not imply invariance across regimes.

Several limitations follow from the protocol. Operating thresholds are selected on a small CBIS--DDSM validation split and carried over to the external cohorts, which affects sensitivity--specificity tradeoffs. The analysis focuses on patient-level AUC and on a single ImageNet-pretrained backbone under tightly constrained computation. Broader evaluations—including alternative topological encodings, multi-parameter filtrations, calibration strategies, and explicit domain adaptation—remain open directions.

\section{Conclusion}

We introduced a wavelet--based vectorization of cubical persistent homology that produces spatial, multi--scale topological maps aligned with the image grid and proved that this construction is globally Lipschitz with respect to the $1$--Wasserstein distance on persistence diagrams. The vectorization assigns birth--death lifetimes to wavelet subbands through an explicit gating function, admits uniform derivative bounds, and yields stacked spatial feature maps with provable stability under diagram perturbations. This establishes a mathematically controlled mechanism for converting persistent homology into image--plane signals compatible with convolutional architectures.

Experimentally, we evaluated this construction under a fixed two--stage pipeline with a single controlled probe architecture (ConvNeXt--Tiny). Stage~I performs patch--level multiple--instance learning on CBIS--DDSM, and Stage~II fine--tunes a Faster~R--CNN detector under a fixed iteration budget. Wavelet--persistence maps are concatenated directly at the detector input without modifying the backbone, loss, or optimizer. Training uses digitized film CBIS--DDSM only, with external testing on full--field digital INbreast and CMMD at the patient level under a fixed operating threshold.

The primary limitation of this study is the shallow scope of the architectural ablation. The analysis fixes a single backbone and a single optimization regime. Broader coverage across larger ConvNeXt variants and across fundamentally different backbone families would further clarify how architectural scale and inductive bias interact with stable topological side channels.

A natural direction for future work is to integrate topological information into the detector architecture rather than supply it solely as an auxiliary input. Possible architectural points of insertion are feature-map modulation and the region proposal network to route topological information directly into the detector’s learned representations.

\section*{References}
\bibliographystyle{IEEEtran}
\bibliography{refs}

\end{document}